\documentclass{svproc}
\usepackage{url}

\usepackage{algorithmicx}
\usepackage{setspace}
\usepackage{graphicx,psfrag,epsf}
\usepackage{booktabs}
\usepackage{algorithm}
\usepackage{algpseudocode}

\floatname{algorithm}{Algorithm}
\usepackage{url, amssymb, amsfonts}
\usepackage[colorlinks = true,
linkcolor = blue,
urlcolor  = blue,
citecolor = blue,
anchorcolor = blue]{hyperref}
\usepackage{graphicx}
\usepackage{amsmath}
\usepackage{paralist}
\usepackage{psfrag}
\usepackage[sort&compress,comma,square,numbers]{natbib}
\usepackage{url} 
\usepackage{setspace}
\usepackage{lscape}
\usepackage{color,amssymb}
\usepackage{dcolumn}
\usepackage{indentfirst, verbatim, float}

\usepackage{enumerate}

\addtolength{\oddsidemargin}{-0.8in}%
\addtolength{\evensidemargin}{-0.8in}%
\addtolength{\textwidth}{1.6in}%
\addtolength{\textheight}{1.3in}%
\addtolength{\topmargin}{-.8in}%

\begin{document}
\mainmatter              
\title{Fast and Scalable Multi-Kernel Encoder Classifier}
\titlerunning{Multi-Kernel Encoder Classifier}  
\author{Cencheng Shen}
\authorrunning{Cencheng Shen} 
\institute{University of Delaware, Newark, DE 19716, USA\\}

\maketitle              

\begin{abstract}
This paper introduces a new kernel-based classifier by viewing kernel matrices as generalized graphs and leveraging recent progress in graph embedding techniques. The proposed method facilitates fast and scalable kernel matrix embedding, and seamlessly integrates multiple kernels to enhance the learning process. Our theoretical analysis offers a population-level characterization of this approach using random variables. Empirically, our method demonstrates superior running time compared to standard approaches such as support vector machines and two-layer neural network, while achieving comparable classification accuracy across various simulated and real datasets.
\keywords{kernel classifier, encoder embedding, multiple kernels}
\end{abstract}

\section{Introduction}
\label{sec:intro}
\noindent
The kernel-based methods have long been an important tool in machine learning. In the realm of supervised learning, the support vector machine (SVM) is a renowned method using kernels \citep{Cortes1995,ScholkopfSmolaMuller1999,Vapnik1999,Scholkopf2002learning}. The core objective of SVM is to maximize the margin of separation between distinct classes through the utilization of a kernel function, rendering it a suitable tool for classification. 

Assuming $\mathbf{Y} \in \{1,2\}^{n}$ and the corresponding sample data is $\mathbf{X} \in \mathbb{R}^{n \times p}$, let $\delta(\cdot,\cdot): \mathbb{R}^{p} \times \mathbb{R}^{p}$ be the kernel function, and $\mathbf{A} \in \mathbb{R}^{n \times n}$ be the sample kernel matrix, where
\begin{align*}
\mathbf{A}(i,j)=\delta(\mathbf{X}(i,:), \mathbf{X}(j,:)).
\end{align*}
SVM attempts to find a bounded and non-negative vector $\mathbf{C} \in \mathbb{R}^{1 \times n}$, such that
\begin{align*}
&\mathbf{C} = \arg\max_{\mathbf{C}} \frac{1}{2} (\mathbf{C}\otimes\mathbf{Y})\mathbf{A}(\mathbf{C}\otimes\mathbf{Y})^{T} - \sum_{i=1,\ldots,n}\mathbf{C}(i)\\
&\mbox{subject to } \mathbf{C}\mathbf{Y}^{T}=0.
\end{align*}
Excluding the intercept, SVM constructs a decision boundary based on
\begin{align*}
\mathbf{Z}=\mathbf{A} (\mathbf{C}\otimes\mathbf{Y}) \in \mathbb{R}^{n \times 1}.
\end{align*}

From a dimension reduction perspective, one can interpret $(\mathbf{C}\otimes\mathbf{Y})$ as a projection matrix, and $\mathbf{Z}$ as a one-dimensional representation of the kernel matrix $\mathbf{A}$. Although the steps involved are relatively straightforward, the computational bottlenecks of SVM primarily arise from kernel computation and solving the objective function. Additionally, in the case of multi-class classification where $\mathbf{Y} \in \{1,2,\ldots,K\}^{n}$, SVM requires the construction of $K(K-1)/2$ binary classifiers to separate each pair of classes, significantly slowing down the process for multi-class data. Furthermore, the kernel choices in SVM are typically limited to the inner product for a linear decision boundary or the Gaussian kernel for a nonlinear decision boundary. Ideally, one would like to compare different kernel choices, but that would require cross-validation, rendering it computationally intensive.

From an alternative perspective, a kernel matrix can be viewed as a similarity matrix or a weighted graph. Graph data has become increasingly prevalent in capturing relationships between entities, including but not limited to social networks, communication networks, webpage hyperlinks, and biological systems \cite{GirvanNewman2002, newman2003structure, barabasi2004network, boccaletti2006complex, VarchneyEtAl2011,DCorGraphScreening,GraphCorr}. Given $n$ vertices, a graph can be represented by an adjacency matrix $\mathbf{A} \in \mathbb{R}^{n \times n}$, which coincides with the kernel matrix.

As graph adjacency matrices are high-dimensional, an important approach to explore graph structure is through graph embedding, which aims to provide a low-dimensional Euclidean representation for each vertex. Some common methods include spectral embedding \cite{RoheEtAl2011,SussmanEtAl2012,Priebe2019}, node2vec \cite{grover2016node2vec, node2vec2021}, graph convolutional neural network \cite{kipf2017semi, Wu2019ACS, wang2022combining}, etc. To the best of our knowledge, there has been no existing investigation into using graph embedding approaches for kernel matrices. This is partly because most graph embedding approaches are relatively slow, and the theoretical properties usually focus on binary graphs, where entries of $\mathbf{A}$ are either $0$ or $1$. Notably, a recent graph-encoder embedding was proposed \citep{GEE1, GEEClustering}, which has linear complexity for graph data \citep{GEESparse, GEEParallel}, works well for weighted graphs \citep{GEEDistance}, and can be used for latent community recovery \citep{GEERefine} or in multiple-graph settings \citep{GEEFusion, GEEDynamics}.

Motivated by the concept of graph encoder embedding, this paper introduces a new approach for kernel-based classification. We begin by introducing an intermediate algorithm, which directly applies graph encoder embedding to a given kernel matrix. Subsequently, we present a faster version that streamlines matrix multiplication steps, reduces kernel computation complexity from $O(n^2)$ to $O(nK)$, and facilitates multi-kernel comparison through cross-entropy. From a theoretical standpoint, we explore the proposed method from a probabilistic perspective, offering insights into why and how it performs effectively. In the numerical experiments, we provide simulations and real-world examples using text and image data to demonstrate the performance of the proposed method, which achieves comparable classification accuracy in a fraction of the time compared to standard methods such as SVM and two-layer neural network. All proofs are in the appendix.

\section{Graph Encoder Embedding for Kernel}
\label{sec1}
\noindent
This intermediate algorithm directly applies graph encoder embedding to a given kernel matrix, followed by linear discriminant analysis.\\

\noindent
\textbf{Input}: The raw data $\mathbf{X} \in \mathbb{R}^{n \times p}$ and a label vector $\mathbf{Y} \in \{0,1,\ldots,K\}^{n}$, where values from $1$ to $K$ represent known labels, and $0$ for testing samples with unknown labels.

\noindent
\textbf{Step 1}: Form the $n \times n$ kernel matrix from $\mathbf{X}$, where
\begin{align*}
\mathbf{A}(i,j)= \delta(\mathbf{X}(i,:), \mathbf{X}(j,:)) \in \mathbb{R}^{n \times n}.
\end{align*}

\noindent
\textbf{Step 2}: Calculate the number of known observations per class, i.e., 
\begin{align}
\label{eq3}
n_k = \sum_{i=1}^{n} 1(\mathbf{Y}_i=k)
\end{align}
for $k=1,\ldots,K$. Additionally, identify the set of training indices with positive labels as $trn$.

\noindent
\textbf{Step 3}: Compute the matrix $\mathbf{W} \in [0,1]^{n \times K}$ as follow: for each vertex $i=1,\ldots,n$, set
\begin{align}
\label{eq4}
\mathbf{W}(i, k) = 1 / n_k
\end{align} 
if and only if $\mathbf{Y}_i=k$, and $0$ otherwise. Note that samples with unknown labels are effectively assigned zero values, meaning $\mathbf{W}(i, :)$ is a zero row-vector if and only if $\mathbf{Y}_i=0$.

\noindent
\textbf{Step 4}: Compute the kernel embedding:
\begin{align*}
\mathbf{Z}= \mathbf{A}\mathbf{W} \in \mathbb{R}^{n \times K}.
\end{align*}

\noindent
\textbf{Step 5}: Train a linear discriminant model $g(\cdot)$ on $(\mathbf{Z}(trn,:), \mathbf{Y}(trn))$.

\noindent
\textbf{Output}: The final embedding $\mathbf{Z}$ and the classification model $g(\cdot)$.
\\


\noindent
The algorithm is essentially the same as the one-hot graph encoder embedding in \cite{GEE1,GEEDistance}. The matrix $\mathbf{A}$ is a kernel matrix transformed from standard data, which can be viewed as a general graph structure. Note that the matrix $\mathbf{W}$ plays a similar role as the SVM transformation matrix $(\mathbf{C}\otimes\mathbf{Y}$). Specifically, the matrix $(\mathbf{C}\otimes\mathbf{Y})$ assign $0$ weights to samples out of the margin, and similarly $\mathbf{W}$ only assigns weights in the dimension same as the sample classes. Subsequently, both matrices are multiplied with $\mathbf{A}$.

Nevertheless, $\mathbf{W}$ offers distinct advantages. It is considerably faster to construct than solving the SVM objective function, creates margin of separation in a multivariate space, and adeptly handles multi-class (i.e., $K>2$) data without necessitating pairwise classification models as in SVM. It is important to emphasize that testing labels were not utilized in the embedding process; instead, the testing data were embedded using the same $\mathbf{W}$ constructed exclusively from the training labels.

From a time complexity perspective, the kernel matrix computation in Step 1 costs $O(n^2 p)$. Step 2 and 3 construct the embedding matrix $\mathbf{W}$ directly using training labels, which has a complexity of $O(n)$. Step 4 is a standard matrix multiplication and costs $O(n^2)$. Assuming a fixed constant $K$, linear discriminant analysis in Step 5 costs $O(n)$. Consequently, this intermediate kernel encoder classifier has a time complexity of $O(n^2 p)$.

\section{Fast Multi-Kernel Encoder Classifier}
\label{sec2}
\noindent
Building upon the intermediate algorithm, in this section we propose an optimized multi-kernel encoder classifier. This optimized approach simplifies matrix multiplication and enables the selection of the optimal kernel from multiple choices through the utilization of cross-entropy.\\

\noindent
\textbf{Input}: The given data $(\mathbf{X},\mathbf{Y})$, and a set of $M$ different kernel choices: 
\begin{align*}
\{\delta_1(\cdot,\cdot), \cdots, \delta_M(\cdot,\cdot)\}.
\end{align*} 

\noindent
\textbf{Step 1}: Compute the number of known observations per class $n_k$, same as in Equation~\ref{eq3}. Additionally, identify the set of training indices with positive labels as $trn$.

\noindent
\textbf{Step 2}: Compute the matrix $\mathbf{W} \in [0,1]^{n \times K}$, same as in Equation~\ref{eq4}. Furthermore, the one-hot encoding matrix $\mathbf{V}$ of the label vector is also calculated:
\begin{align*}
\mathbf{V}(i, k) = 1
\end{align*} 
if and only if $\mathbf{Y}_i=k$, and $0$ otherwise. Note that $\mathbf{V}(i, :)$ is a zero row-vector if and only if $\mathbf{Y}_i=0$, i.e., labels with zero values are not used.

\noindent
\textbf{Step 3}: Compute the matrix $\mathbf{U} \in \mathbb{R}^{K \times p}$ as 
\begin{align*}
\mathbf{U} = \mathbf{W}^{T} \mathbf{X}
\end{align*} 
Then, for each kernel choice $\delta_m(\cdot,\cdot)$, compute the embedding $\mathbf{Z}^{\delta_{m}}$ as follows:
\begin{align*}
\mathbf{Z}^{\delta_{m}}(i,j)= \delta_{m}(\mathbf{X}(i,:), \mathbf{U}(j,:)) \in \mathbb{R}^{n \times K}
\end{align*}
for $i =1,\ldots,n$ and $j=1,\ldots, K$.

\noindent
\textbf{Step 4}: For each $m$, a linear discriminant model $g_{m}(\cdot)$ is trained on $(\mathbf{Z}^{\delta_{m}}(trn,:), \mathbf{Y}(trn))$, and denote 
\begin{align*}
\mathbf{T}_{m}= g_{m}(\mathbf{Z}^{\delta_{m}}) \in \mathbb{R}^{n \times K}.
\end{align*}
Then, compute the cross-entropy with respect to $\mathbf{V}$:
\begin{align*}
c_{m}= -\sum\limits_{k=1}^{K}\sum\limits_{i=1}^{n} \mathbf{V}(i,k) \log(\mathbf{T}_{m}(i,k))
\end{align*}
for $m=1,\ldots,M$.

\noindent
\textbf{Step 5}: Select the optimal kernel choice that minimizes cross-entropy, i.e., 
\begin{align*}
m^{*}= \arg\min_{m=1,\ldots,M} c_{m}.
\end{align*}

\noindent
\textbf{Output}: The best multi-kernel embedding $\mathbf{Z}_{m^{*}}$ and the classification model $g_{m^{*}}(\cdot)$.
\\

\noindent
The optimized method utilizes the matrix $\mathbf{U}$ as a proxy and only calculates the pairwise kernels between $\mathbf{X}$ and $\mathbf{U}$, thereby eliminating the need to compute the full kernel matrix. This improvement substantially reduces the computational burden, and makes multi-kernel comparison more practical. Given $M$ different kernels, both SVM and the intermediate method in Section~\ref{sec1} would require $O(Mn^2p)$ in terms of time complexity. In contrast, the optimized method only needs $O(MKnp)$, resulting in linear time complexity with low constant overhead. 

Another key insight lies in the fact that the resulting embedding readily facilitates the comparison of kernel choices. As the final embedding is in dimension $K$ and is properly normalized through linear discriminant analysis (meaning that a large value in dimension $k$ indicates class membership in class $k$), it can be directly used in cross-entropy computation, where a smaller value indicates better class separation. In our experiments, to ensure numerical scalability, we utilize the cross-entropy from the inner product as a benchmark and only switch to another kernel if the resulting cross-entropy is at least $30\%$ smaller.

It is worth noting that in the case of $M=1$ and the kernel function being the inner product, the optimized method is identical to the intermediate method. In other words, the final embedding and the classifier coincide. For general kernel functions, the resulting classifier may be different, but they still approximate each other, and the numerical results are very similar for all the kernel choices we experimented with. Further explanations regarding $\mathbf{U}$ and why it provides a proper approximation are provided in the next section from a probabilistic perspective.

\section{Population Theory}
\noindent
In this section, we consider the proposed method within the standard probabilistic framework to gain a better understanding of its behavior. We begin with the following standard probabilistic assumptions for the input data. 

Let $K$ and $p$ be fixed, and consider the random variables $(X, Y) \in \mathbb{R}^{p} \times [K]$. Here, $Y$ follows a categorical distribution with prior probabilities:
\begin{align*}
{\pi_k \in (0,1], \sum_{k=1}^{K} \pi_k = 1}, 
\end{align*}
and $X$ has finite moments and follows a K-component mixture distribution:
\begin{align*}
X \sim \sum_{k=1}^{K} \pi_k f_{X|Y}(a_j), j=1,\ldots,m.
\end{align*}
Then, each sample data point is assumed to be independently and identically distributed:
\begin{align*}
(\mathbf{X}(i,:), \mathbf{Y}(i)) \stackrel{i.i.d.}{\sim} (X,Y)
\end{align*}
for each $i=1,\ldots,n$. 

In the probabilistic framework, the transformation matrix $\mathbf{U}$ in Section~\ref{sec2} can be understood as follows:
\begin{theorem}
\label{thm1}
Given the random variable pair $(X,Y)$ of finite moments. Let $U \in \mathbb{R}^{K \times p}$, where each row satisfies
\begin{align*}
U(k,:) &= E(X|Y=k).
\end{align*}

Under the above probabilistic assumption for the sample data $(\mathbf{X}, \mathbf{Y})$, the matrix $\mathbf{U}$ in Section~\ref{sec2} satisfies:
\begin{align*}
\|\mathbf{U}- U \|_{F} \stackrel{n\rightarrow \infty}{\rightarrow} 0
\end{align*}
for the Frobenius matrix norm.
\end{theorem}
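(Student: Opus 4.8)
The plan is to recognize that each row of $\mathbf{U}$ is nothing but a within-class sample mean, after which the result follows from the strong law of large numbers; the only genuine care needed is that the class count $n_k$ is itself a random quantity. Unwinding the definition $\mathbf{U} = \mathbf{W}^{T}\mathbf{X}$ together with Equation~\ref{eq4} gives, for each $k \in \{1,\ldots,K\}$ and coordinate $j \in \{1,\ldots,p\}$,
\begin{align*}
\mathbf{U}(k,j) = \sum_{i=1}^{n}\mathbf{W}(i,k)\,\mathbf{X}(i,j) = \frac{1}{n_k}\sum_{i:\,\mathbf{Y}_i = k}\mathbf{X}(i,j) = \frac{\frac{1}{n}\sum_{i=1}^{n}\mathbf{X}(i,j)\,1(\mathbf{Y}_i = k)}{\frac{1}{n}\sum_{i=1}^{n}1(\mathbf{Y}_i = k)},
\end{align*}
where the last rewriting as a ratio of two empirical averages is the key step that will let me sidestep the randomness of $n_k$.

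First I would treat the denominator: the indicators $1(\mathbf{Y}_1 = k),\ldots,1(\mathbf{Y}_n = k)$ are i.i.d. Bernoulli$(\pi_k)$, so the strong law gives $n_k/n \to \pi_k$ almost surely, and since $\pi_k > 0$ by the probabilistic assumption, the denominator is eventually bounded away from $0$ (and $n_k \to \infty$ almost surely). Next I would treat the numerator: the summands $\mathbf{X}(i,j)\,1(\mathbf{Y}_i = k)$ are i.i.d., and the finite-moment assumption on $X$ guarantees $E|\mathbf{X}(i,j)\,1(\mathbf{Y}_i=k)| \le E|\mathbf{X}(i,j)| < \infty$, so the strong law gives
\begin{align*}
\frac{1}{n}\sum_{i=1}^{n}\mathbf{X}(i,j)\,1(\mathbf{Y}_i = k) \;\longrightarrow\; E\bigl(X(j)\,1(Y=k)\bigr) = \pi_k\, E\bigl(X(j)\mid Y = k\bigr)
\end{align*}
almost surely, where $X(j)$ denotes the $j$-th coordinate of $X$.

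Combining the two limits via the continuous mapping theorem (the map $(a,b)\mapsto a/b$ is continuous at $(\pi_k E(X(j)\mid Y=k),\,\pi_k)$ because $\pi_k \neq 0$) yields $\mathbf{U}(k,j) \to E(X(j)\mid Y=k) = U(k,j)$ almost surely, for each fixed pair $(k,j)$. Since there are only $Kp$ such pairs and a finite intersection of almost-sure events is almost sure, all entries converge simultaneously; and for a matrix of fixed dimension, entrywise convergence is equivalent to convergence in Frobenius norm, so $\|\mathbf{U} - U\|_{F} \to 0$ almost surely, hence also in probability. The only subtle point — and the main obstacle — is handling the random denominator $n_k$: the ratio reformulation above avoids the alternative route of conditioning on the random index set $\{i : \mathbf{Y}_i = k\}$ and arguing that the corresponding rows of $\mathbf{X}$ are conditionally i.i.d. from $f_{X\mid Y=k}$, which is also valid but more delicate to state rigorously.
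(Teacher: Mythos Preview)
Your proof is correct and follows essentially the same approach as the paper: unwind $\mathbf{U}(k,j)$ as the within-class sample mean $\frac{1}{n_k}\sum_{i:\mathbf{Y}_i=k}\mathbf{X}(i,j)$ and invoke the law of large numbers entrywise, then pass to the Frobenius norm. If anything, your treatment is more careful than the paper's, which simply asserts the convergence without explicitly handling the random denominator $n_k$; your ratio reformulation and continuous mapping argument fill in exactly the detail the paper glosses over.
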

\noindent
Therefore, the transformation matrix $\mathbf{U}$ can be interpreted as a sample estimate of the matrix $U$, which is a fixed matrix consisting of class-wise conditional expectation. Consequently, given a single kernel function $\delta(\cdot,\cdot)$, the sample embedding
$\mathbf{Z}(i,j)=\delta(\mathbf{X}(i,:), \mathbf{U}(j,:))$ can be considered as a sample realization of the random variable $Z \in \mathbb{R}^{K}$, where 
\begin{align*}
Z_{j}=\delta(X, U(j,:))
\end{align*}
for $j=1,\ldots,K$. This characterization explains the main difference between the proposed algorithm in Section~\ref{sec2} and the intermediate version in Section~\ref{sec1}: instead of computing the pairwise kernel matrices for all samples of $X$, the fast algorithm only computes the kernel between samples of $X$ and the estimate of $U$.

Now, if we consider the special case where the kernel function $\delta(\cdot, \cdot)$ is simply inner product, then the embedding random variable satisfies
\begin{align*}
Z=X U^{T}.
\end{align*}
As $U$ is a fixed matrix, if the random variable $X$ is normally distributed per class, the resulting encoder embedding is also normally distributed. In this scenario, linear discriminant analysis is an ideal choice, because it not only normalizes the embedding variable $Z$ but is also the optimal classifier for the joint distribution of $(Z,Y)$. 

\begin{theorem}
\label{thm2}
Suppose $X$ is a mixture of normal distribution, with each normal component having the same variance matrix, i.e.,
\begin{align*}
X|(Y=y) \sim N(\mu_{y}, \Sigma).
\end{align*}
Then the embedding variable $Z=X U^{T}$ satisfies:
\begin{align*}
Z|(Y=y) \sim N(\mu_{y}U^{T}, U\Sigma U^{T}),
\end{align*}
in which case the linear discriminant analysis is the Bayes optimal classifier for $(Z,Y)$.
\end{theorem}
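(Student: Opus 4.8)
The plan is to prove Theorem~\ref{thm2} in two parts: first establishing the distributional statement for $Z$, and then invoking the classical characterization of linear discriminant analysis as the Bayes rule for a Gaussian mixture with shared covariance.

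For the first part, I would argue as follows. Fix a class $y$. Conditioned on $Y=y$, we have $X \sim N(\mu_y, \Sigma)$. Since $U \in \mathbb{R}^{K \times p}$ is a fixed (non-random) matrix, the map $x \mapsto x U^T$ is a fixed linear transformation, and it is a standard fact that an affine image of a Gaussian vector is Gaussian: if $X \sim N(\mu_y, \Sigma)$ then $X U^T \sim N(\mu_y U^T, U \Sigma U^T)$. One computes $E(Z \mid Y=y) = E(X \mid Y=y) U^T = \mu_y U^T$ and $\mathrm{Cov}(Z \mid Y=y) = U \, \mathrm{Cov}(X \mid Y=y) \, U^T = U \Sigma U^T$, which gives the claimed conditional law. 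The key structural observation is that the within-class covariance $U \Sigma U^T$ does not depend on $y$, so $(Z, Y)$ is again a Gaussian mixture with a common covariance matrix across classes.

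For the second part, I would recall that for a mixture model in which $Z \mid (Y=y) \sim N(m_y, S)$ with a shared covariance $S$ and priors $\pi_y$, the posterior $P(Y = y \mid Z = z)$ is proportional to $\pi_y \exp\!\big(-\tfrac{1}{2}(z - m_y)^T S^{-1} (z - m_y)\big)$; taking logs, the quadratic term $z^T S^{-1} z$ cancels across classes, leaving discriminant functions that are affine in $z$. Hence the Bayes-optimal decision rule $\arg\max_y P(Y=y \mid Z=z)$ is exactly the rule produced by linear discriminant analysis (with the usual pooled-covariance estimate converging to $S$ and class-mean estimates converging to $m_y$). This is precisely the setting covered by the standard LDA optimality result, so it suffices to note that $(Z,Y)$ falls into this class, which we established in the first part. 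One caveat worth addressing: LDA optimality for $(Z,Y)$ should not be conflated with optimality for the original $(X,Y)$ — the statement is only that, among rules acting on the embedded variable $Z$, the LDA rule is Bayes optimal for predicting $Y$ from $Z$, and this is all that is claimed.

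I do not anticipate a serious obstacle here; both ingredients are textbook. The only point requiring a small amount of care is making precise what ``Bayes optimal classifier for $(Z,Y)$'' means and confirming that the shared-covariance structure genuinely transfers through $U$ — i.e., that $U\Sigma U^T$ is the same matrix for every class, which is immediate since neither $U$ nor $\Sigma$ depends on $y$. If $U$ is not full row rank then $U\Sigma U^T$ may be singular and one would phrase the LDA step on the appropriate lower-dimensional subspace (or note that the posterior still takes the required exponential-family form with the Moore--Penrose inverse); I would mention this only briefly, as it does not affect the generic case.
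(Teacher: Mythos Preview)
Your proposal is correct and follows essentially the same approach as the paper: apply the linear-transformation property of Gaussians to obtain $Z\mid(Y=y)\sim N(\mu_y U^{T}, U\Sigma U^{T})$, observe the covariance is class-independent, and then invoke the standard result that LDA is Bayes optimal for a Gaussian mixture with common covariance. Your write-up is more explicit (spelling out the posterior cancellation and the rank caveat), but the underlying argument is identical to the paper's.
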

\noindent
The above theorem provides rationale for using linear discriminant analysis on the embedding $\mathbf{Z}$. While the assumption of a shared variance matrix might appear restrictive, it simplifies the covariance estimation step and often yields better results compared to quadratic discriminant analysis, which assumes different components have distinct variance matrices.

Finally, when using the inner product kernel in the proposed method, the resulting encoder embedding variable $Z$ can be viewed as an effective dimension reduction technique that preserves the L2 norm margin between the conditional expectations. This is supported by the following theorem:
\begin{theorem}
\label{thm3}
Suppose that the conditional expectations in the original space $\mathbb{R}^{p}$ are separated by certain margin, i.e.
\begin{align*}
\|E(X | Y=k) - E(X | Y=j)\| = \alpha_{kj}
\end{align*}
for any $k \neq j$.

When using the inner product kernel, the resulting encoder embedding variable $Z$ in the reduced $\mathbb{R}^{K}$ space satisfies
\begin{align*}
\|E(Z | Y=k) - E(Z | Y=j)\| \geq \frac{1}{\sqrt{2}} \alpha_{kj}^2.
\end{align*}
\end{theorem}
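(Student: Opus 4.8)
The plan is to work entirely at the population level with the inner-product kernel, so that $Z = X U^{T}$ with rows $U(l,:) = E(X\mid Y=l) =: \mu_l$ (as defined in Theorem~\ref{thm1}), and hence the $l$-th coordinate of $Z$ is $Z_l = \langle X, \mu_l\rangle$. First I would compute the class-conditional mean of the embedding: by linearity of expectation, $E(Z\mid Y=k)$ is the vector in $\mathbb{R}^{K}$ whose $l$-th coordinate equals $\langle \mu_k, \mu_l\rangle$. Subtracting, the separation vector $d := E(Z\mid Y=k) - E(Z\mid Y=j)$ has $l$-th coordinate $\langle \mu_k - \mu_j, \mu_l\rangle$, so that $\|d\|^{2} = \sum_{l=1}^{K}\langle \mu_k - \mu_j, \mu_l\rangle^{2}$.

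The key step is to retain only the two coordinates $l = k$ and $l = j$ and discard the rest, which is legitimate since every summand is nonnegative: writing $a := \langle \mu_k - \mu_j, \mu_k\rangle$ and $b := \langle \mu_k - \mu_j, \mu_j\rangle$, we obtain $\|d\|^{2} \ge a^{2} + b^{2}$. The algebraic identity that drives the bound is $a - b = \langle \mu_k - \mu_j, \mu_k - \mu_j\rangle = \|\mu_k - \mu_j\|^{2} = \alpha_{kj}^{2}$. Combining this with the elementary inequality $a^{2} + b^{2} \ge \frac{1}{2}(a-b)^{2}$, which is merely $\frac{1}{2}(a+b)^{2}\ge 0$ rearranged, yields $\|d\|^{2} \ge \frac{1}{2}\alpha_{kj}^{4}$, and taking square roots gives the claimed bound $\|d\| \ge \frac{1}{\sqrt{2}}\,\alpha_{kj}^{2}$.

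I do not expect a genuine obstacle here; the argument is short and the only points requiring care are (i) confirming that the $U$ appearing in $Z = XU^{T}$ is indeed the population matrix of class-conditional expectations, so that the coordinate $Z_l = \langle X,\mu_l\rangle$ is the relevant object (for the finite-sample version one would additionally invoke Theorem~\ref{thm1} to pass to the limit, but the statement is phrased at the population level), and (ii) noting that the coordinates $l=k$ and $l=j$ are distinct and both present, which holds because $k\neq j$ with $k,j\in[K]$. I would also remark that the quadratic dependence on $\alpha_{kj}$ and the constant $1/\sqrt{2}$ are artifacts of passing through the Gram structure of the $\mu_l$: the bound is essentially tight in the degenerate configuration where $a = -b$ (e.g.\ $\mu_k = -\mu_j$ with the remaining $\mu_l$ orthogonal to $\mu_k-\mu_j$), so this line of argument cannot improve the constant.
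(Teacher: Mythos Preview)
Your proposal is correct and follows essentially the same route as the paper: compute $E(Z\mid Y=k)_l=\langle\mu_k,\mu_l\rangle$, discard all coordinates except $l=k$ and $l=j$, and bound the remaining two-term sum from below by $\alpha_{kj}^{4}/2$. The only difference is cosmetic: where the paper expands $(\mu_k\mu_k^{T}-\mu_j\mu_k^{T})^{2}+(\mu_k\mu_j^{T}-\mu_j\mu_j^{T})^{2}$ by brute force and simplifies over several lines to $(\|\mu_k\|^{2}-\|\mu_j\|^{2})^{2}/2+\alpha_{kj}^{4}/2$, you reach the same conclusion in one step via the identity $a-b=\alpha_{kj}^{2}$ and the inequality $a^{2}+b^{2}\ge\tfrac12(a-b)^{2}$, which is tidier and also makes the equality case transparent.
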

\noindent
For instance, consider an ideal scenario where the variance of the random variable $X$ is sufficiently small, and the conditional means of $X$ are well-separated, leading to perfect separability of different components of $X$ in the original $p$-dimensional space. In such cases, the encoder embedding using the inner product shall also exhibit perfect separability in the reduced $K$-dimensional space.

This theorem also highlights a distinction between the proposed method and SVM, despite both utilizing kernel transformations. Theoretically, SVM is designed to identify the optimal margin of separation for a given dataset by transforming it into a high-dimensional kernel space and solving an objective function that maximizes this margin. 

On the other hand, our method does not aim to find the best margin of separation. Instead, it constructs a fast transformation via the training data and the kernel function. The theorem demonstrates that this transformation from the original data space into a $K$-dimensional space (via the kernel space) readily preserves the margin of separation. In high-dimensional data, estimating the margin of separation can be challenging. As the proposed kernel-based algorithm effectively preserves this margin in a lower-dimensional subspace, it allows subsequent methods (such as linear discriminant analysis in our case) to better estimate this margin of separation. 

It is worth noting that the margin may actually be improved, as $\alpha^2/\sqrt{2}$ is larger than $\alpha$ when $\alpha > \sqrt{2}$. However, the extent of this improvement and whether it actually benefits the eventual classification depend on the actual data distribution, making it a topic of future investigation. Since there is no explicit objective function involved to enlarge the margin, the proposed approach trades potential accuracy gains for better computational speed. Our simulations and real data experiments indicate that the proposed method can perform slightly better or worse than SVM, depending on the specific dataset, but it consistently offers significantly faster computation.

\section{Experiments}
\noindent
Throughout both the simulations and real data experiments, we considered four distinct methods for comparison: the linear encoder classifier, the multi-kernel encoder classifier, support vector machine (SVM), and a two-layer neural network. In each simulation, we conducted 5-fold cross-validation, repeated the process for 20 different Monte-Carlo replicates, and reported the average classification error and running time. All experiments were performed using MATLAB 2022a on a standard desktop equipped with a 16-core Intel CPU and 64GB memory.

The linear encoder classifier employs the algorithm in Section~\ref{sec2} with the inner product as its sole kernel choice. Te multi-kernel encoder classifier utilizes the same algorithm with three distinct kernel choices: the inner product, the Euclidean distance-induced kernel (based on the distance-to-kernel transformation in \citep{DcorKernel}), and the Spearman rank correlation \citep{KendallBook}. These alternative kernels have been empirically shown to perform effectively and can capture nonlinear decision boundaries.

For the support vector machine, we utilized MATLAB's fitcecoc\footnote{\url{https://www.mathworks.com/help/stats/fitcecoc.html}} method to fit multiclass models \citep{Allwein2000,Escalera2009}. All parameters were set to their default values, with SVM using the Gaussian kernel. The two-layer neural network used MATLAB's fitcnet\footnote{\url{https://www.mathworks.com/help/stats/fitcnet.html}} method, with a neuron size of $100$ and all other parameters at their default settings \citep{Nocedal2006,Glorot2010}.

\subsection{Simulations}
\noindent
The simulations in this study involve the following six settings, with $K=5$, $Y=\{1,2,\ldots,K\}$ equally likely, $p=5000$, and $n$ ranging from $50$ to $500$:
\begin{itemize}
\item High-Dimensional Uniform: $X_{Y} \sim Uniform(1,3)$, and $X_{i} \sim Uniform(0,1)$ for every $i \neq y$.
\item Uniform + Noise: Same as above, then add uniform noise to each dimension, i.e., $X_i=X_i+0.5*Uniform(0,1)$ for each $i$.
\item Uniform Transformed: Same as above, followed by an additional random transformation, i.e., $X =X *Q$, where $Q \in \mathbb{R}^{p \times p}$ and each entry of $Q$ is randomly generated from $Uniform(0,1)$ during each replicate.
\item High-Dimensional Normal: $X_{Y} \sim Normal(8,1)$, and $X_{i} \sim Normal(1,1)$ for every $i \neq y$.
\item Normal + Noise: Same as above, then add uniform noise to each dimension, i.e., $X_i=X_i+2*Uniform(0,1)$ for each $i$.
\item Normal Transformed: Same as above, followed by an additional random transformation, i.e., $X =X *Q$, where $Q \in \mathbb{R}^{p \times p}$ and each entry of $Q$ is randomly generated from $Uniform(0,1)$ during each replicate.
\end{itemize}
These simulations are high-dimensional, with the noisy case being more challenging and the transformed case being the most difficult. Nevertheless, the signal to separate the classes is concentrated within a $5$-dimensional subspace, specifically in the top $K$ dimensions within the non-transformed models, and in a random subspace in the dimension-transformed scenarios. In all these simulations, the optimal classification error equals to $0$. This optimal error can be determined by computing the Bayes error based on the probability model.

In each Monte-Carlo replicate, $(\mathbf{X},\mathbf{Y})$ is generated independently from the specified distribution. Evaluation of each method is performed using the same 5-fold split, and the results for average classification error and running time are presented in Figure~\ref{fig0}.

Across all simulations, it is observed that the linear encoder, multi-kernel encoder, and SVM perform similarly well in terms of classification error, all converging to $0$ error as the sample size increases. The two-layer neural network shows slightly worse performance in the non-transformed simulations and significantly worse performance in the dimension-transformed simulations. In terms of running time, the linear encoder is the most efficient, followed by the multi-encoder. 

\begin{figure*}[htbp]
\centering
	\includegraphics[width=0.99\textwidth,trim={0cm 0cm 0cm 0cm},clip]{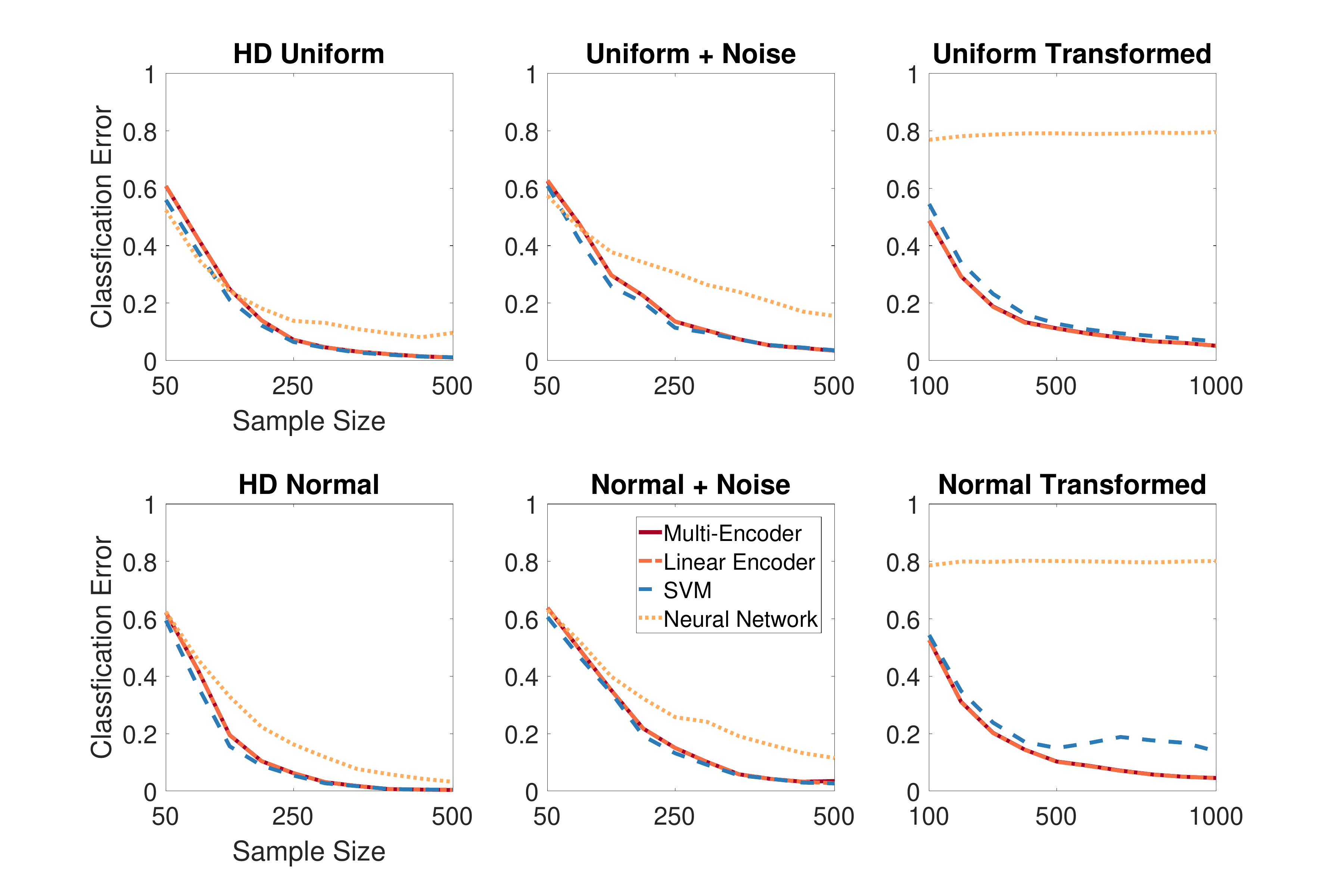}
	\includegraphics[width=0.99\textwidth,trim={0cm 0cm 0cm 0cm},clip]{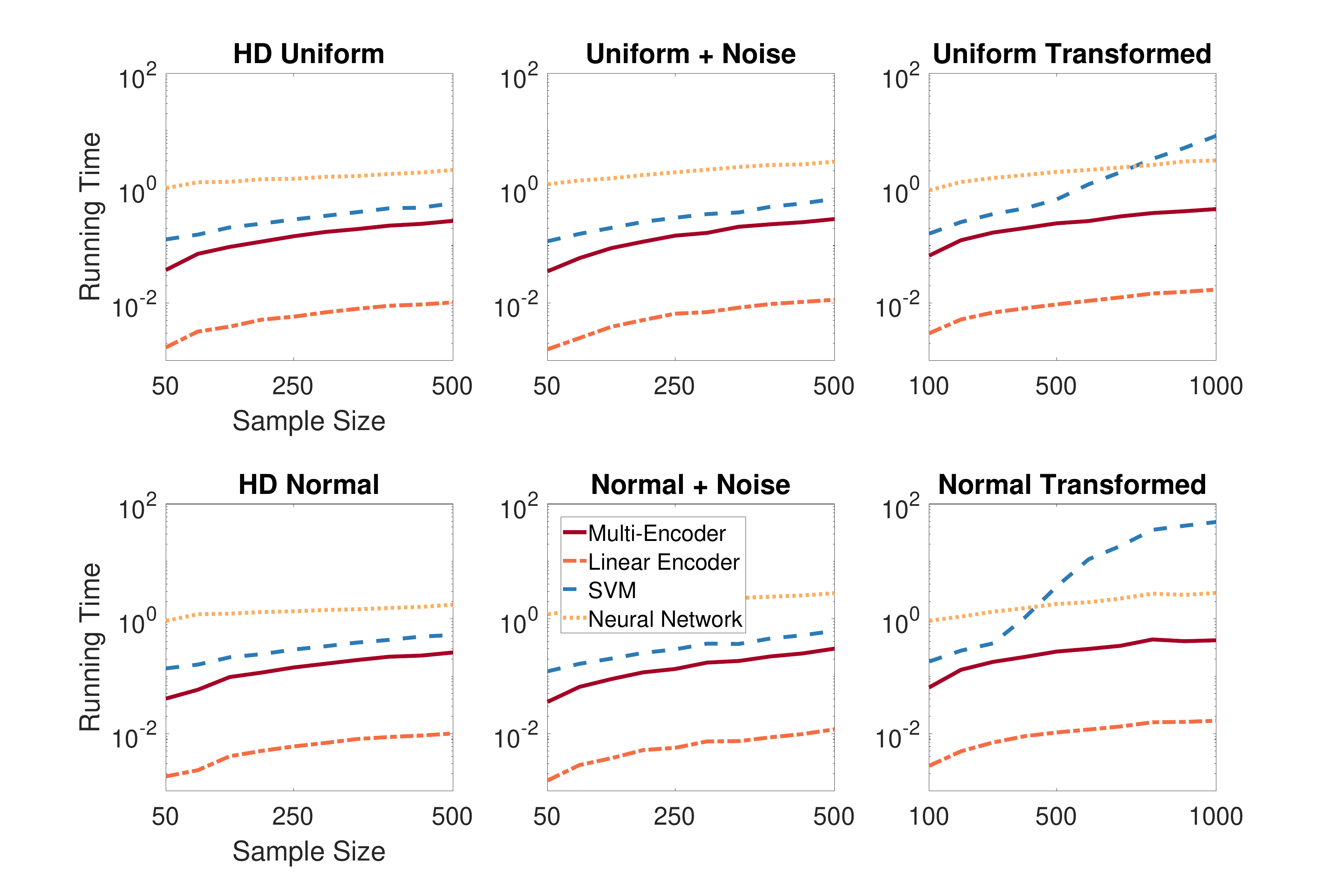}
	\caption{The figure provides a comparison of classification errors and running times across six different simulations. These comparisons were conducted utilizing 20 Monte-Carlo replicates and a 5-fold evaluation.}
	\label{fig0}
\end{figure*}

\subsection{Real Data}
\noindent
In this section, we assess the classification performance using a diverse range of real-world datasets, including three text datasets and six face image datasets, each with its own characteristics:
\begin{itemize}
\item Cora dataset \citep{mccallum2000automating} is a citation network with $2708$ papers and $7$ classes. Each paper is represented by a $1433$-dimensional binary vector indicating the presence or absence of corresponding words from a dictionary.
\item Citeseer data \citep{giles1998citeseer} is another citation network with $3312$ papers and $6$ classes, featuring attribute vectors of $3703$ dimensions for word presence.
\item ISOLET spoken data language \citep{cole1990spoken} is a database of spoken English letters, consisting of $7797$ spoken letters. The data is represented in a $617$-dimensional space, and there are $26$ classes.
\item ORL \citep{SamariaHarter1994} has $400$ face images of $40$ subjects. ORL 1 images are of size $32 \times 32$ pixels, while ORL 2 images are of size $64 \times 64$ pixels, resulting in dimensionality of $1024$ and $4096$, respectively.
\item CMU PIE \citep{SimBakerBsat2003,HeEtAl2005} includes $11554$ face images from $68$ subjects, each sized at $32 \times 32$ pixels.
\item Extended Yale B database \citep{GeorphiadesBelheumeurKriegman2001, LeeHoKriegman2005} contains $2414$ face images of $38$ individuals under various poses and lighting conditions. The images are resized to $32 \times 32$ pixels. Two other versions, Yale 1 and 2, are also experimented, which consist of $15$ subjects, $165$ images, and are sized at $32 \times 32$ and $64 \times 64$ pixels, respectively.
\end{itemize}
Table~\ref{table1} reports the classification error and running time of each method. We make the following observations:

The linear encoder method consistently demonstrates exceptional classification performance across all datasets, often ranking as the best or among the best methods. Notably, it also stands out for its speed, as it exhibits nearly instant processing for the given datasets.

The multi-kernel encoder, while slightly more time-consuming than the linear encoder, remains significantly faster than other competing methods. Moreover, its ability to evaluate two additional kernel functions enhances its adaptability to the data, resulting in improved classification performance for specific datasets. For example, it selects the Euclidean distance-induced kernel for the CMU PIE dataset and the Spearman correlation for the Extended Yale dataset, leading to substantial reductions in classification errors compared to the linear encoder.

In comparison, SVM and neural network consume considerably more time. Between them, the support vector machine is the most competitive, with classification performance similar to that of the linear encoder and multi-kernel encoder, in line with the findings from the simulations. However, SVM is also the most computationally expensive method, especially when dealing with image datasets featuring large dimensions and moderate numbers of classes. 

\begin{table*}[htbp]
\renewcommand{\arraystretch}{1.3}
\centering
\caption{This table displays the 5-fold classification error and running time for each method. It includes both the average and the standard deviation, with the best error highlighted in bold within each dataset.}
\scalebox{1.0}{
\begin{tabular}{|c||c|c|c|c|c}
\hline
5-Fold Error & Multi-Encoder & Linear Encoder & SVM & Two-Layer NN\\
 \hline
Citeseer  & $29.8\%\pm0.2\%$ & $29.8\%\pm0.2\%$   & $\textbf{28.7}\% \pm 0.4\%$ & $30.9\% \pm 0.4\%$ \\
Cora & $28.8\%\pm0.6\%$ & $28.9\%\pm0.2\%$   & $25.5\% \pm 0.4\%$ & $\textbf{25.1}\% \pm 0.4\%$ \\
ISOLET & $7.6\%\pm0.1\%$ & $7.6\%\pm0.1\%$   & $\textbf{3.6}\% \pm 0.1\%$ & $4.2\% \pm 0.1\%$ \\
ORL1  & $\textbf{2.0\%}\pm0.4\%$ & $\textbf{2.0\%}\pm0.4\%$   & $4.8\% \pm 0.3\%$ & $78.9\% \pm 7.2\%$ \\
ORL2  & $\textbf{2.6\%}\pm0.4\%$ & $\textbf{2.6\%}\pm0.4\%$   & $5.1\% \pm 0.4\%$ & $84.6\% \pm 5.0\%$ \\
PIE & $4.7\%\pm0.1\%$ & $8.9\%\pm0.1\%$   & $\textbf{2.5\%} \pm 0.1\%$ & $33.7\% \pm 7.5\%$ \\
Yale Extended & $\textbf{0.9\%}\pm0.1\%$ & $17.0\%\pm0.3\%$   & $9.7\% \pm 0.6\%$  & $18.6\% \pm 0.4\%$ \\
Yale1  & $20.4\%\pm1.5\%$ & $\textbf{20.0\%}\pm1.4\%$   & $26.8\% \pm 1.1\%$ & $59.2\% \pm 0.6\%$ \\
Yale2 & $\textbf{13.6\%}\pm1.4\%$ & $\textbf{13.6\%}\pm1.4\%$   & $20.1\% \pm 0.9\%$ & $52.1\% \pm 7.8\%$ \\
\hline
 \hline
 Running Time &  Multi-Encoder & Linear Encoder & SVM & Two-Layer NN\\
\hline
Citeseer  & $1.25\pm0.13$   & $\textbf{0.05}\pm0.01$ & $8.45\pm0.24$ & $4.53\pm0.17$\\
Cora & $0.43\pm0.05$   & $\textbf{0.01}\pm0.00$ & $1.39\pm0.06$& $1.56\pm0.06$\\
ISOLET & $1.10\pm0.01$   & $\textbf{0.03}\pm0.00$ & $7.95\pm0.23$ & $6.58\pm0.29$\\
ORL1  & $0.13\pm0.05$   & $\textbf{0.01}\pm0.00$ & $32.7\pm2.6$ & $6.87\pm2.32$\\
ORL2  & $0.36\pm0.03$   & $\textbf{0.01}\pm0.00$ & $139\pm7.3$ & $24.0\pm7.3$\\
PIE & $3.21\pm0.07$   & $\textbf{0.09}\pm0.01$ & $1410\pm35.1$& $112\pm8.1$\\
Yale Extended & $0.69\pm0.02$   & $\textbf{0.01}\pm0.00$ & $353\pm7.9$ & $26.3\pm2.1$\\
Yale1 & $0.06\pm0.01$   & $\textbf{0.01}\pm0.00$ & $12.6\pm1.1$ & $1.22\pm0.91$\\
Yale2 & $0.14\pm0.02$   & $\textbf{0.01}\pm0.00$ & $29.9\pm1.8$ & $4.99\pm2.4$\\
\hline
\end{tabular}
}
\label{table1}
\end{table*}

\section{Conclusion}
\noindent
This paper introduces a new classifier based on kernels and graph embedding. The experimental results presented confirm the advantages of the proposed algorithm, demonstrating excellent classification accuracy comparable to that of SVM. Notably, it achieves this performance with significantly improved runtime, often instantaneous for moderately-sized multivariate data. While other variants of SVM and advanced architectures of neural networks may likely achieve better accuracy on benchmark data, the proposed approach offers a competitive alternative to standard baseline classifiers, enabling rapid classification with arbitrary kernel choices and exceptional scalability for big data analytics.

\section*{Acknowledgement}
\addcontentsline{toc}{section}{Acknowledgment}
\noindent
This work was supported by the National Science Foundation DMS-2113099, and by funding from Microsoft Research.

\bibliographystyle{spmpsci}
\bibliography{spmpsci}

\clearpage

\appendix

\begin{center}
{\large\bf Appendix}
\end{center}

\section{Proof}

\setcounter{figure}{0}
\setcounter{theorem}{0}
\renewcommand{\thealgorithm}{C\arabic{algorithm}}
\renewcommand{\thefigure}{E\arabic{figure}}
\renewcommand{\thesubsection}{\thesection.\arabic{subsection}}
\renewcommand{\thesubsubsection}{\thesubsection.\arabic{subsubsection}}
\pagenumbering{arabic}
\renewcommand{\thepage}{\arabic{page}}

\begin{theorem}
Given the random variable pair $(X,Y)$ of finite moments. Let $U \in \mathbb{R}^{K \times p}$, where each row satisfies
\begin{align*}
U(k,:) &= E(X|Y=k).
\end{align*}

Under the above probabilistic assumption for the sample data $(\mathbf{X}, \mathbf{Y})$, the matrix $\mathbf{U}$ in Section~\ref{sec2} satisfies:
\begin{align*}
\|\mathbf{U}- U \|_{F} \stackrel{n\rightarrow \infty}{\rightarrow} 0
\end{align*}
for the Frobenius matrix norm.
\end{theorem}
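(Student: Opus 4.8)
The plan is to recognize $\mathbf{U} = \mathbf{W}^T \mathbf{X}$ row by row as a vector of sample class-conditional averages, and then invoke the strong law of large numbers componentwise. First I would fix a class index $k \in \{1,\ldots,K\}$ and a coordinate $\ell \in \{1,\ldots,p\}$, and write out $\mathbf{U}(k,\ell) = \sum_{i=1}^n \mathbf{W}(i,k)\mathbf{X}(i,\ell) = \frac{1}{n_k}\sum_{i : \mathbf{Y}_i = k} \mathbf{X}(i,\ell)$, using the definition of $\mathbf{W}$ in Equation~\ref{eq4}. So $\mathbf{U}(k,\ell)$ is exactly the empirical mean of the $\ell$-th coordinate over those training samples that fall in class $k$.

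Next I would argue convergence of this empirical mean to $U(k,\ell) = E(X_\ell \mid Y = k)$. Conditioned on the number $n_k$ of class-$k$ samples, the corresponding rows $\mathbf{X}(i,:)$ are i.i.d. draws from $f_{X|Y=k}$, which has a finite first moment by the finite-moment assumption on $X$; hence by the strong law of large numbers the conditional average converges almost surely to $E(X_\ell \mid Y = k)$ as $n_k \to \infty$. Since $\pi_k > 0$, the count $n_k = \sum_i \mathbf{1}(\mathbf{Y}_i = k)$ is itself a binomial$(n,\pi_k)$ variable and $n_k \to \infty$ almost surely (indeed $n_k/n \to \pi_k$ a.s.), so the ``$n_k \to \infty$'' hypothesis is met with probability one. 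A clean way to make this rigorous without fussing over the random index set is to note $\mathbf{U}(k,\ell) = \big(\frac{1}{n}\sum_{i} \mathbf{1}(\mathbf{Y}_i=k)\mathbf{X}(i,\ell)\big) \big/ \big(\frac{1}{n}\sum_i \mathbf{1}(\mathbf{Y}_i=k)\big)$; numerator $\to \pi_k E(X_\ell \mid Y=k)$ a.s. and denominator $\to \pi_k$ a.s. by the SLLN applied to the i.i.d. pairs $(\mathbf{X}(i,:),\mathbf{Y}(i))$, so the ratio converges to $E(X_\ell \mid Y=k)$ a.s. by the continuous mapping theorem.

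Finally I would assemble the entrywise convergence into the Frobenius-norm statement. Since $K$ and $p$ are fixed and finite, $\|\mathbf{U} - U\|_F^2 = \sum_{k=1}^K \sum_{\ell=1}^p (\mathbf{U}(k,\ell) - U(k,\ell))^2$ is a finite sum of terms each converging to $0$ almost surely, hence the whole sum converges to $0$ almost surely, and therefore $\|\mathbf{U}-U\|_F \to 0$. (If one wants convergence in probability or in $L^2$ instead, the same decomposition works, using the weak law or an $L^2$ bound on the coordinate averages — the latter needs a finite second moment, which is within the ``finite moments'' assumption.) The only mild subtlety, and the step I would be most careful about, is the handling of the random denominator $n_k$: one should either condition on $n_k$ and use the a.s. divergence of $n_k$, or use the ratio-of-averages reformulation above to sidestep the issue entirely; everything else is a routine application of the law of large numbers.
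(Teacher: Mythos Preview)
Your proposal is correct and follows essentially the same route as the paper: write each entry $\mathbf{U}(k,\ell)$ as the class-$k$ sample mean $\tfrac{1}{n_k}\sum_{i}\mathbf{X}(i,\ell)\mathbf{1}(\mathbf{Y}_i=k)$, apply the law of large numbers to get entrywise convergence to $E(X_\ell\mid Y=k)$, and sum the finitely many squared differences to conclude Frobenius-norm convergence. If anything, your treatment of the random denominator $n_k$ via the ratio-of-averages reformulation is more careful than the paper's, which simply invokes ``basic conditional probability and law of large number'' at that step.
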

\begin{proof}
As 
\begin{align*}
\mathbf{U} = \mathbf{W}^{T} \mathbf{X},
\end{align*}
it follows from basic conditional probability and law of large number that
\begin{align*}
\mathbf{U}(k,s) &= \mathbf{W}(:,k)^{T} \mathbf{X}(:,s) \\
& = \frac{1}{n_k} \sum_{i=1}^{n} {\mathbf{X}(i,s)1(\mathbf{Y}_i=k)}\\
& \stackrel{n\rightarrow \infty}{\rightarrow} E(X_s|Y=k),
\end{align*}
where $X_s$ denotes the $s$th dimension of the random variable $X$.

Concatenating all dimensions for $s=1,2,\ldots,p$ and all $k=1,\ldots,K$, we can conclude that
\begin{align*}
\|\mathbf{U}- U \|_{F} \stackrel{n\rightarrow \infty}{\rightarrow} 0.
\end{align*}
\end{proof}

\begin{theorem}
Suppose $X$ is a mixture of normal distribution, with each normal component having the same variance matrix, i.e.,
\begin{align*}
X|(Y=y) \sim N(\mu_{y}, \Sigma).
\end{align*}
Then the embedding variable $Z=X U^{T}$ satisfies:
\begin{align*}
Z|(Y=y) \sim N(\mu_{y}U^{T}, U\Sigma U^{T}),
\end{align*}
in which case the linear discriminant analysis is the Bayes optimal classifier for $(Z,Y)$.
\end{theorem}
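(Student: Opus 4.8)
The plan is to separate the statement into two independent claims: the distributional identity for $Z$, and the Bayes optimality of linear discriminant analysis given that distribution.

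For the distributional claim, I would argue directly from the fact that affine maps preserve Gaussianity. Conditioning on $Y=y$, the matrix $U$ is deterministic (it is the population array of class-conditional means $U(k,:) = E(X\mid Y=k)$, not a sample quantity), so $Z = XU^{T}$ is a fixed linear image of the Gaussian vector $X\mid(Y=y)\sim N(\mu_y,\Sigma)$. Linearity of expectation gives $E(Z\mid Y=y) = E(X\mid Y=y)\,U^{T} = \mu_y U^{T}$, and bilinearity of covariance gives $\mathrm{Cov}(Z\mid Y=y) = U\,\mathrm{Cov}(X\mid Y=y)\,U^{T} = U\Sigma U^{T}$. Since every linear combination of the coordinates of a multivariate normal vector is univariate normal (equivalently, the characteristic function of $Z\mid(Y=y)$ is that of a Gaussian), we conclude $Z\mid(Y=y)\sim N(\mu_y U^{T}, U\Sigma U^{T})$. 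Marginally, $Z$ is then the corresponding $K$-component Gaussian mixture with weights $\pi_k$.

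For the optimality claim, I would recall that the Bayes classifier for $(Z,Y)$ assigns $z$ to the class maximizing the posterior $P(Y=k\mid Z=z)\propto \pi_k f_{Z\mid Y=k}(z)$, equivalently the class maximizing $\log\pi_k + \log f_{Z\mid Y=k}(z)$. Writing $\Sigma_Z = U\Sigma U^{T}$ and $\nu_k = \mu_k U^{T}$ and substituting the Gaussian density, the normalizing term $-\tfrac12\log\det(2\pi\Sigma_Z)$ is common to all classes and drops out, and the quadratic term $-\tfrac12 z^{T}\Sigma_Z^{-1}z$ is also common to all classes and drops out. What remains is the discriminant $\delta_k(z) = z^{T}\Sigma_Z^{-1}\nu_k^{T} - \tfrac12\nu_k\Sigma_Z^{-1}\nu_k^{T} + \log\pi_k$, which is affine in $z$; this is exactly the linear discriminant function that LDA targets, and the pooling of a single covariance across classes in LDA is consistent with the shared-$\Sigma$ assumption. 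Hence the LDA decision rule coincides with the Bayes rule, so LDA is Bayes optimal for $(Z,Y)$.

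I expect the main obstacle to be the possible degeneracy of $\Sigma_Z = U\Sigma U^{T}$: when $K>p$, or more generally when $U$ fails to have full row rank, $\Sigma_Z$ is singular and $f_{Z\mid Y=k}$ does not exist as a density on all of $\mathbb{R}^{K}$. I would handle this by restricting attention to the affine subspace supporting $Z$ (an appropriate translate of the row space of $U$), on which $Z$ is a non-degenerate Gaussian, and running the argument there; equivalently, replacing $\Sigma_Z^{-1}$ by the Moore--Penrose pseudoinverse leaves the cancellation of the common quadratic term intact on the shared support, so the Bayes rule is still the affine LDA rule. In the regime of the theorem ($K,p$ fixed, $\Sigma$ positive definite, $U$ of full row rank) this caveat is vacuous and the clean statement holds verbatim.
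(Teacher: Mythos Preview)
Your proof is correct and follows essentially the same approach as the paper: invoke that $U$ is a fixed matrix so a linear image of a Gaussian is Gaussian with the stated mean and covariance, then observe that the class-conditional covariance $U\Sigma U^{T}$ does not depend on $y$, which is precisely the setting in which LDA coincides with the Bayes rule. The paper's proof is terser (it cites a reference for the LDA-optimality step rather than writing out the discriminant functions) and does not discuss the possible singularity of $U\Sigma U^{T}$; your added detail on both points is welcome but not a different method.
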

\begin{proof}
When the conditional distribution of $X|Y$ is normally distributed, the distribution of $Z|Y$ is also normally distributed because $U$ is a fixed matrix. Additionally, the mean and variance of $Z|Y=k$ are given by:
\begin{align*}
E(Z|Y=k)&=\mu_{k} U
\end{align*}
and 
\begin{align*}
Var(Z|Y=k)&=U \Sigma  U^{T} 
\end{align*}
In this case, the variance of $Z|Y=k$ is independent of the class label $k$, and as a result, linear discriminant analysis is the Bayes optimal classifier \citep{DevroyeGyorfiLugosiBook}.
\end{proof}

\begin{theorem}
Suppose that the conditional expectations in the original space $\mathbb{R}^{p}$ are separated by certain margin, i.e.
\begin{align*}
\|E(X | Y=k) - E(X | Y=j)\| = \alpha_{kj}
\end{align*}
for any $k \neq j$.

When using the inner product kernel, the resulting encoder embedding variable $Z$ in the reduced $\mathbb{R}^{K}$ space satisfies
\begin{align*}
\|E(Z | Y=k) - E(Z | Y=j)\| \geq \frac{1}{\sqrt{2}} \alpha_{kj}^2.
\end{align*}
\end{theorem}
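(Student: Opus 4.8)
The plan is to reduce the statement to an elementary two-variable inequality. First I would recall the population-level description already set up before Theorem~2: with the inner product kernel, $Z = X U^{T}$, where $U \in \mathbb{R}^{K\times p}$ has $l$-th row $\mu_{l} := E(X\mid Y=l)$. Since $U$ is a fixed (deterministic) matrix, taking conditional expectations gives $E(Z\mid Y=k) = \mu_{k} U^{T}$, so that the $l$-th coordinate of the difference vector $d := E(Z\mid Y=k) - E(Z\mid Y=j)$ equals $(\mu_{k}-\mu_{j})^{T}\mu_{l}$ for $l = 1,\ldots,K$.

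The key step is then to keep only the two coordinates $l=k$ and $l=j$, which (since $k\neq j$ these are distinct, and the remaining $K-2$ squared terms are nonnegative) yields the genuine lower bound $\|d\| \ge \sqrt{a^{2}+b^{2}}$, where I write $v := \mu_{k}-\mu_{j}$, $a := v^{T}\mu_{k}$, $b := v^{T}\mu_{j}$. The useful observation is the telescoping identity $a - b = v^{T}(\mu_{k}-\mu_{j}) = \|\mu_{k}-\mu_{j}\|^{2} = \alpha_{kj}^{2}$.

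Finally, for any reals $a,b$ one has $a^{2}+b^{2} \ge \tfrac{1}{2}(a-b)^{2}$ (equivalently $(a+b)^{2}\ge 0$), so $a^{2}+b^{2} \ge \tfrac{1}{2}\alpha_{kj}^{4}$ and hence $\|d\| \ge \sqrt{a^{2}+b^{2}} \ge \alpha_{kj}^{2}/\sqrt{2}$, which is exactly the claim. I do not expect a real obstacle here; the only points requiring care are the bookkeeping — confirming that restricting to the coordinates indexed by $k$ and $j$ only decreases the norm — and noting that the inner-product form of $Z$ is what makes the telescoping identity produce $\|\mu_k-\mu_j\|^2 = \alpha_{kj}^2$ rather than a plain inner product.
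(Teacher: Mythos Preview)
Your proposal is correct and follows the same strategy as the paper: restrict to the two coordinates of $d$ indexed by $k$ and $j$, and bound the resulting two-term sum $a^{2}+b^{2}$ below by $\alpha_{kj}^{4}/2$. Your telescoping identity $a-b=\alpha_{kj}^{2}$ together with $a^{2}+b^{2}\ge\tfrac12(a-b)^{2}$ is a cleaner route to that bound than the paper's explicit expansion, which in effect derives the equivalent identity $a^{2}+b^{2}=\tfrac12(a+b)^{2}+\tfrac12(a-b)^{2}=\tfrac12(\|\mu_{k}\|^{2}-\|\mu_{j}\|^{2})^{2}+\tfrac12\alpha_{kj}^{4}$.
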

\begin{proof}
Without loss of generality, it suffices to prove this for $k=1$ and $j=2$. Denoting $\mu_1=E(X | Y=1)$ and $\mu_2=E(X | Y=2)$, we have 
\begin{align*}
\|\mu_1 - \mu_2\|^2 &= \|\mu_1\|^2 + \|\mu_2\|^2 - 2 \mu_1 \mu_2^{T} = \alpha^2
\end{align*}
by the margin assumption on $X$. Next, using the basic property of the L2 norm, we can further deduce:
\begin{align*}
&\|E(Z | Y=1) - E(Z | Y=2)\|^2 \\
=& \sum_{j=1}^{K} (E(Z_j | Y=1) - E(Z_j | Y=2))^2\\
 \geq& \sum_{j=1}^{2} (E(Z_j | Y=1) - E(Z_j | Y=2))^2.
\end{align*}
In other words, to prove that $Z$ preserves the margin of separation in terms of the conditional expectation, it suffices to consider the first two dimensions $Z_1$ and $Z_2$, as the remaining dimensions cannot decrease the separation in L2 norm.

Under the inner product kernel, we can express the conditional expectations as follows:
\begin{align*}
E(Z_j | Y=1) = \mu_{1}U(j,:)^{T} &= \mu_{1} \mu_{j}^{T},\\
E(Z_j | Y=2) = \mu_{2}U(j,:)^{T} &= \mu_{2} \mu_{j}^{T}.
\end{align*}
From this, we can derive the following:
 \begin{align*}
 &\sum_{j=1}^{2} (E(Z_j | Y=1) - E(Z_j | Y=2))^2\\
 =&  (\mu_{1}\mu_{1}^{T}-\mu_{2}\mu_{1}^{T})^2 +  (\mu_{2}\mu_{1}^{T}-\mu_{2}\mu_{2}^{T})^2 \\
 =& \mu_{1}\mu_{1}^{T}\mu_{1}\mu_{1}^{T}-2\mu_{2}\mu_{1}^{T}\mu_{1}\mu_{1}^{T}+\mu_{2}\mu_{1}^{T}\mu_{2}\mu_{1}^{T} \\
&+\mu_{1}\mu_{2}^{T}\mu_{1}\mu_{2}^{T}-2\mu_{2}\mu_{2}^{T}\mu_{1}\mu_{2}^{T}+\mu_{2}\mu_{2}^{T}\mu_{2}\mu_{2}^{T} \\
=& \|\mu_{1}\|^4+\|\mu_{2}\|^4+2\mu_{1}\mu_{2}^{T}\mu_{1}\mu_{2}^{T}-2\mu_{1}\mu_{2}^{T}(\|\mu_{1}\|^2+\|\mu_{2}\|^2)\\
=& \|\mu_{1}\|^4+\|\mu_{2}\|^4+2\mu_{1}\mu_{2}^{T}\mu_{1}\mu_{2}^{T}-2\mu_{1}\mu_{2}^{T}(2 \mu_{1}\mu_{2}^{T} + \alpha^2)\\
=& \|\mu_{1}\|^4+\|\mu_{2}\|^4 - 2\mu_{1}\mu_{2}^{T}\mu_{1}\mu_{2}^{T}-2 \alpha^2 \mu_{1}\mu_{2}^{T}\\
=& \|\mu_{1}\|^4+\|\mu_{2}\|^4 - (\|\mu_1\|^2 + \|\mu_2\|^2-\alpha^2)^2/2\\
& -\alpha^2 (\|\mu_1\|^2 + \|\mu_2\|^2-\alpha^2)\\
=& \|\mu_{1}\|^4/2+\|\mu_{2}\|^4/2 - \|\mu_{1}\|^2\|\mu_{2}\|^2+\alpha^4/2\\
=& (\|\mu_{1}\|^2 -\|\mu_{2}\|^2)^2 /2  + \alpha^4/2\\
\geq & a^4/2.
\end{align*}
As a result, we can conclude that:
\begin{align*}
\|E(Z | Y=1) - E(Z | Y=2)\| \geq  \frac{1}{\sqrt{2}} \alpha^2.
\end{align*}

Finally, the proof presented here is applicable to any pair of classes $(k,j)$, not limited to $(k=1,j=2)$. To extend the proof to other class pairs, one can apply the same logic to $Z_j$ and $Z_k$, compute $(E(Z_j | Y=k) - E(Z_j | Y=l))^2 + (E(Z_k | Y=k) - E(Z_k | Y=j))^2$, and the inequality steps follow in the same manner.
\end{proof}

\end{document}